\def\eqref#1{equation~\ref{#1}}
\def\1{\bm{1}}
\DeclareMathAlphabet{\mathsfit}{\encodingdefault}{\sfdefault}{m}{sl}
\SetMathAlphabet{\mathsfit}{bold}{\encodingdefault}{\sfdefault}{bx}{n}
\newtheorem{theorem}{Theorem}
\definecolor{darkgreen}{RGB}{0, 120, 50}
\newcommand{\fullmodel}{Spectral Preservation Network\xspace}
\newcommand{\model}{\texttt{SpecNet}\xspace}
\newcommand{\fulllayer}{Joint Graph Evolution\xspace}
\newcommand{\layer}{\texttt{JGE}\xspace}
\newcommand{\fulllightlayer}{Light Joint Graph Evolution\xspace}
\newcommand{\lightlayer}{\texttt{LJGE}\xspace}
\newcommand{\fullloss}{Spectral Concordance\xspace}
\newcommand{\loss}{\texttt{SC}\xspace}
\title{Spectral Neural Graph Sparsification}
\author{
\href{https://orcid.org/0000-0001-9402-7375}{\includegraphics[scale=0.06]{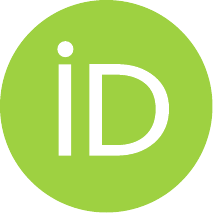}\hspace{1mm}Angelica Liguori} \\
	ICAR-CNR\\
	\texttt{angelica.liguori@icar.cnr.it}
    \And
    \href{https://orcid.org/0000-0003-3978-9291}{\includegraphics[scale=0.06]{orcid.pdf}\hspace{1mm}Ettore Ritacco} \\
	University of Udine\\
	\texttt{ettore.ritacco@uniud.it} \\
    \And 
    \href{https://orcid.org/0009-0005-9454-5910}{\includegraphics[scale=0.06]{orcid.pdf}\hspace{1mm}Pietro Sabatino} \\
	ICAR-CNR\\
	\texttt{pietro.sabatino@icar.cnr.it}
    \And
    \href{https://orcid.org/0000-0001-5420-9959}{\includegraphics[scale=0.06]{orcid.pdf}\hspace{1mm}Annalisa Socievole} \\
	ICAR-CNR\\
	\texttt{annalisa.socievole@icar.cnr.it}
   }
\begin{document}

\maketitle

\begin{abstract}
Graphs are central to modeling complex systems in domains such as social networks, molecular chemistry, and neuroscience. While Graph Neural Networks, particularly Graph Convolutional Networks, have become standard tools for graph learning, they remain constrained by reliance on fixed structures and susceptibility to over-smoothing. We propose the \fullmodel, a new framework for graph representation learning that generates reduced graphs serving as faithful proxies of the original, enabling downstream tasks such as community detection, influence propagation, and information diffusion at a reduced computational cost. The \fullmodel introduces two key components: the \fulllayer layer and the \fullloss loss. The former jointly transforms both the graph topology and the node feature matrix, allowing the structure and attributes to evolve adaptively across layers and overcoming the rigidity of static neighborhood aggregation. The latter regularizes these transformations by enforcing consistency in both the spectral properties of the graph and the feature vectors of the nodes. We evaluate the effectiveness of \fullmodel on node-level sparsification by analyzing well-established metrics and benchmarking against state-of-the-art methods. The experimental results demonstrate the superior performance and clear advantages of our approach.
\end{abstract}

\section{Introduction}
\label{sec:introduction}

Graphs are the natural language of complex systems, from molecules and transportation networks to social and neural interactions. In recent years, Graph Neural Networks (GNNs) have become the dominant paradigm for learning from such data~\cite{Bronstein2017Geometric,Zhou2020Graph}, enabling powerful applications in chemistry~\cite{duvenaud2015convolutional}, neuroscience~\cite{ZHANG2022102463}, and large-scale network analysis~\cite{hamilton2020graph}. Yet, despite their success, standard GNNs suffer from two fundamental limitations. First, they rely on a \emph{fixed graph structure}, which prevents them from adapting connectivity to the task at hand. Second, they quickly run into scalability and expressiveness issues, as message passing tends to oversmooth node representations~\cite{DBLP:conf/iclr/OonoS20} and becomes inefficient in large, dense graphs.

A natural way to overcome these challenges is to let the model itself \emph{reshape the graph}. Rather than treating the input topology as immutable, one can learn transformations that align structure and features in a task-driven manner, while discarding redundant information. This perspective opens the door to two intertwined objectives: designing neural layers that generate adaptive embeddings by evolving the graph, and introducing principled loss functions that sparsify the topology without breaking its spectral integrity.

In this work, we address both aspects through a new architecture, the \textbf{Spectral Preservation Network} (\model). Our contributions are twofold:

\begin{itemize}
    \item \textbf{The Joint Graph Evolution layer (\layer).} A novel mechanism that reparameterizes the graph Laplacian via bilinear transformations, producing embeddings on dynamically learned topologies rather than static input graphs. This layer mitigates oversmoothing and rigidity, enabling richer structure–feature interactions.
    \item \textbf{The Spectral Concordance loss (\loss).} A loss that sparsifies the graph at the node level by combining Laplacian alignment, feature-geometry preservation, and a sparsity-inducing trace penalty. This formulation removes uninformative nodes while maintaining global spectral properties and feature consistency.
\end{itemize}

Together, these components allow \model to move beyond static message passing: the graph is no longer a constraint, but a variable optimized during learning. We show that this approach provides a principled and flexible framework for \emph{node-level sparsification}, significantly improving compression efficiency and downstream performance compared to existing heuristic or task-specific methods.

In summary, this paper introduces a new paradigm for graph representation learning: embedding layers that actively reshape structure, coupled with spectral losses that guide sparsification. This synergy equips GNNs with both flexibility and stability, paving the way for scalable, spectrum-driven graph learning.

\section{\fullmodel}
\label{sec:model}
\fullmodel (\model) is a novel spectral-based neural architecture that jointly learns graph structure and node representations through recursive updates of the graph Laplacian and the node feature space. By operating in the spectral domain and decoupling graph topology from input features, \model enables the dynamic synthesis of structurally coherent graphs while preserving global properties and informative node characteristics.

Consider a graph $G = (V, E)$ without self-loops, where $V = \{1, \dots, n\}$ denotes the set of nodes and $E = \{e_1, \dots, e_m\}$ the set of edges. The structure of $G$ can be algebraically represented in two equivalent forms: via its adjacency matrix or via its incidence matrix.
The definition of the adjacency matrix $A \in \mathbb{R}^{n \times n}$ depends on whether $G$ is directed or undirected.
In \textit{directed graphs} each edge $e_k = i_k \to j_k$ represents a directed connection from node $i_k$ to node $j_k$: the adjacency matrix $A$ is defined elementwise as Equation~\ref{eq:dir_adj}.

In \textit{undirected graphs} each edge $e_k = \{i_k, j_k\}$ is an unordered pair representing a bidirectional connection between nodes $i_k$ and $j_k$: the corresponding adjacency matrix is given by Equation~\ref{eq:undir_adj}.

\begin{minipage}[t]{0.48\textwidth}
\begin{equation}
A_{ij} = 
\begin{cases}
1 & \text{if } i \to j \in E, \\
0 & \text{otherwise}.
\end{cases}
\label{eq:dir_adj}
\end{equation}
\end{minipage}%
\hfill
\begin{minipage}[t]{0.48\textwidth}
\begin{equation}
A_{ij} = A_{ji} =
\begin{cases}
1 & \text{if } \{i, j\} \in E, \\
0 & \text{otherwise}.
\end{cases}
\label{eq:undir_adj}
\end{equation}
\end{minipage}

For undirected graphs, $A$ is symmetric by construction.

The incidence matrix $B \in \{-1, 0, +1\}^{n \times m}$ encodes node-edge relationships based on a chosen orientation for each edge. Its entries are defined as:
\begin{equation}
B_{i,k} = 
\begin{cases}
-1 & \text{if node } i \text{ is the tail of edge } e_k, \\
+1 & \text{if node } i \text{ is the head of edge } e_k, \\
\;\;0 & \text{otherwise}.
\end{cases}
\end{equation}
In directed graphs, each edge $e_k = i_k \to j_k$ has an intrinsic orientation, with $B_{i_k,k} = -1$ and $B_{j_k,k} = +1$. For undirected graphs, an arbitrary but fixed orientation is imposed (e.g., by designating the node with the smaller index as the tail and the larger as the head) before applying the same rule.

Let $X \in \mathbb{R}^{n \times f}$ be the node feature, encoding input features, where each row $X_i$ corresponds to node $i \in V$ and contains an $f$-dimensional attribute vector. This matrix serves as the initial representation of node characteristics. The degree matrix $D \in \mathbb{R}^{n \times n}$ is diagonal, with entries $D_{ii}$ equal to the number of edges incident to node $i$. For directed graphs, $D$ can be decomposed as $D = D^{+} + D^{-}$, where $D^{+}$ and $D^{-}$ are diagonal matrices capturing in-degrees and out-degrees, respectively. Specifically, $D^{+}_{ii}$ counts the number of edges directed toward node $i$, while $D^{-}_{ii}$ counts those originating from it.

\subsection{\fulllayer Layer}
\label{sec:layer}
The core of \model is the \fulllayer (\layer) layer, a novel architectural component that operates on a pair of input matrices: an adjacency matrix $Q_t \in \mathbb{R}^{r_t \times r_t}$ and a feature matrix $H_t \in \mathbb{R}^{r_t \times p_t}$, both sharing the same number of rows. Here, $t$ denotes the layer index within the network. The transformation produces embeddings as updated matrices $Q_{t+1} \in \mathbb{R}^{r_{t+1} \times r_{t+1}}$ and $H_{t+1} \in \mathbb{R}^{r_{t+1} \times p_{t+1}}$, corresponding to a new node set of size $r_{t+1}$ and a space of $p_{t+1}$ features.

The forward computation of the \layer at layer $t$ is defined as:
\begin{equation}
\begin{split}
   J_{t+1} &= \Theta_t \, H_t^\top U_t \, Q_t \, V_t \, H_t,\\
   Q_{t+1} &= \sigma_1 \bigl( J_{t+1} \, \Phi_t \bigr),\\
   H_{t+1} &= \sigma_2 \bigl( J_{t+1} \, \Psi_t \bigr),
\end{split}
\label{eq:jge}
\end{equation}
where $J_{t+1} \in \mathbb{R}^{p_t \times p_t}$ is an intermediate representation, and $\Theta_t \in \mathbb{R}^{r_{t+1} \times p_t}$, $\Phi_t \in \mathbb{R}^{p_t \times r_{t+1}}$, and $\Psi_t \in \mathbb{R}^{p_t \times p_{t+1}}$ are learnable parameter matrices. The functions $\sigma_1$ and $\sigma_2$ denote elementwise nonlinearities.
The matrices $U_t, V_t \in \mathbb{R}^{r_t \times r_t}$ are diagonal normalization matrices defined as follows. Define the row-wise and column-wise absolute sums of $Q_t$:
\begin{equation}
[u_t]_i = \sum_{j=1}^{r_t} |(Q_t)_{ij}|, \qquad
[v_t]_j = \sum_{i=1}^{r_t} |(Q_t)_{ij}|.
\end{equation}
The diagonal entries of $U_t$ and $V_t$ are then given by:
\begin{equation}
[U_t]_{ii} =
\begin{cases}
1 / \sqrt{[u_t]_i}, & \text{if } [u_t]_i > 0, \\
0, & \text{otherwise},
\end{cases} \qquad
[V_t]_{jj} =
\begin{cases}
1 / \sqrt{[v_t]_j}, & \text{if } [v_t]_j > 0, \\
0, & \text{otherwise}.
\end{cases}
\end{equation}

This normalization ensures that the matrix product $U_t \, Q_t \, V_t$ is non-expansive with respect to the Euclidean norm, as discussed in Appendix~\ref{app:stability}.
This property contributes to the numerical stability of the architecture. Non-expansiveness acts as an implicit regularizer, preventing the uncontrolled growth of feature magnitudes, an issue that can compromise optimization in deep architectures.
Unlike explicit normalization techniques such as batch normalization~\cite{pmlr-v37-ioffe15} or spectral normalization~\cite{miyato2018spectral}, this approach enforces norm constraints by construction, without introducing additional computational branches.
Moreover, it contributes to controlling the Lipschitz constant of the network, which has implications for both generalization and adversarial robustness~\cite{gouk2021regularisation,9319198,10.1145/3648351}.

Since $Q_t$ and $H_t$ correspond to a graph adjacency matrix and a node feature matrix, respectively, in a new space, the \layer can be interpreted as a learnable mechanism for jointly evolving both graph structure and node representations. The output $Q_{t+1}$ represents a transformed graph topology with updated edge weights and a redefined node set, while $H_{t+1}$ encodes node features aligned with this new structure.

A \fullmodel is constructed by stacking multiple \layer layers. The initial inputs are defined as:
\begin{equation}
    H_0 = X, \qquad Q_0 = A,
    \label{eq:initialization}
\end{equation}
where $X \in \mathbb{R}^{n \times f}$ is the node feature matrix and $A \in \mathbb{R}^{n \times n}$ is the initial adjacency matrix. This implies $r_0 = n$ and $p_0 = f$, with the initial normalization matrices given by:
\begin{equation}
[U_0]_{ii} = 
\begin{cases}
1 / \sqrt{D^-_{ii}}, & \text{if } D^-_{ii} > 0, \\
0, & \text{otherwise},
\end{cases}
\qquad
[V_0]_{ii} = 
\begin{cases}
1 / \sqrt{D^+_{ii}}, & \text{if } D^+_{ii} > 0, \\
0, & \text{otherwise},
\end{cases}
\end{equation}
where $D_{ii}$ denotes the degree of node $i$, as aforesaid.

In the case of undirected graphs, where the adjacency matrix $A$ is symmetric, each \layer layer admits a simplified variant, referred to as the \fulllightlayer (\lightlayer) layer. This formulation exploits the symmetry of $Q_t$ to reduce both computational overhead and the number of learnable parameters. The update equations for the \lightlayer are given by:
\begin{equation}
\begin{split}
    H_{t+1} &= \Theta_t \, H_t^\top U_t \, Q_t \, U_t \, H_t, \\
    Q_{t+1} &= \sigma \bigl( H_{t+1} \, \Theta_t^\top \bigr),
\end{split}
\label{eq:ljge}
\end{equation}
where $\Theta_t \in \mathbb{R}^{r_{t+1} \times f}$ is the only learnable parameter matrix at layer $t$, and $\sigma$ denotes an elementwise activation function. By leveraging the symmetry of $Q_t$, this design yields a more lightweight and efficient alternative to the full \layer formulation.

\subsection{Node Sparsification}
\label{sec:output}
\model performs node pruning by leveraging the final representations $Q_T$ and $H_T$ produced by the last \layer layer (at step $T$).
These matrices are first vectorized and concatenated into a single feature vector, which is then fed into a feedforward layer equipped with a Gumbel–sigmoid activation. The output is a binary selection mask $z \in \{0,1\}^n$, where each entry $z_i$ indicates whether node $i$ is retained.
The mask is transformed into a diagonal matrix $Z = \mathrm{diag}(z_1, \dots, z_n)$, which is used to extract the subgraph induced by the selected nodes, with updated adjacency matrix $ZAZ$ and feature matrix $ZX$. In the case of directed graphs, post-processing may be necessary to eliminate isolated nodes resulting from the removal of both their in-neighbors and out-neighbors.
The node-level sparsification approach is visually illustrated in Figure~\ref{fig:node_sparsifier}.

\begin{figure}
    \centering
    \includegraphics[width=0.7\linewidth]{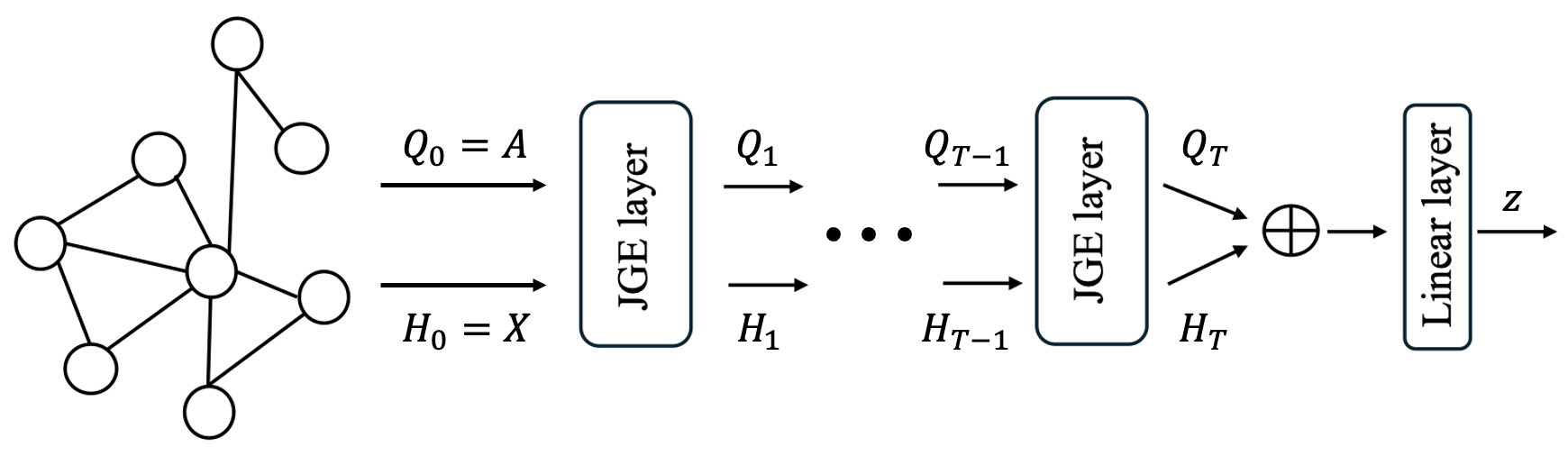}
    \caption{Node-level sparsification Pipeline. The operator $\oplus$ denotes the concatenation of the vectorized (flattened) forms of $Q_T$ and $H_T$.}
    \label{fig:node_sparsifier}
\end{figure}

\subsection{\fullloss}
\label{sec:loss}
The proposed loss function, termed as \fullloss (\loss) loss, measures the discrepancy between the leading spectra of the Laplacian of the original input graph and that of the graph synthesized by \model. This choice is motivated by the fact that the eigenvalues of the graph Laplacian capture fundamental structural properties such as connectivity, clustering tendencies, and diffusion dynamics, as detailed in Appendix~\ref{sec:motivation}.

The combinatorial Laplacian is most commonly defined in terms of the adjacency matrix $A\in\{0,1\}^{n\times n}$ and degree matrix $D\in\mathbb{N}^{n\times n}$ as
$
L = D - A
\label{eq:undirected-laplacian}
$.
While this definition suffices for undirected graphs (where $A$ is symmetric), it does not generalize cleanly to directed graphs, which admit both an Out‐Degree Laplacian $L^- = D^- - A$ and an In‐Degree Laplacian $L^+ = D^+ - A$, that may be non-symmetrical.

An alternative formulation uses the incidence matrix $B\in\{-1,0,1\}^{n\times m}$, in which case
$L = B B^\top$
provides a unified definition that applies equally to directed and undirected graphs. 
As shown in Appendix~\ref{app:laplacian-properties}, this incidence‐based Laplacian is symmetric and positive semidefinite.
Although valid, this formulation is computationally inefficient, as the incidence matrix $B \in \mathbb{R}^{n \times m}$ scales with the number of edges $m$, which can be comparable to $n^2$ in dense graphs. 
Fortunately, an equivalent and more compact representation for directed graphs is derived in Appendix~\ref{app:directed-laplacian}, hence the Laplacian matrix can be computed as:
\begin{equation}
    L = BB^\top = 
    \begin{cases}
        D - A, & \text{if the graph is undirected,}\\
        \\
        D - (A + A^\top), & \text{otherwise} \, .
    \end{cases}
\label{eq:directed-laplacian}
\end{equation}
This identity enables Laplacian computation using only node-level structures, avoiding the explicit construction of the incidence matrix.

To ensure strict positive definiteness, the shifted Laplacian can be defined as:
\begin{equation}
L^* = L + \alpha_1\,I,
\label{eq:shifted-laplacian}
\end{equation}
with $\alpha_1 \in \mathbb{R}_{>0}$ and $I$ as the identity matrix.  Appendix~\ref{app:nonsingular} proves that $L^*$ is nonsingular, symmetric, and positive definite, implying that all its eigenvalues are real and strictly positive.

Let $[\lambda_1 \ge \lambda_2 \ge \ldots]_{L^*}$ denote the eigenvalues of $L^*$ in descending order, and let Let $Z \in \{0,1\}^{n \times n}$ be the diagonal selection matrix indicating the retained nodes as output of the \model.  The spectral component of the \loss loss function compares the top $k_1$ eigenvalues of the original and generated graphs:
\begin{equation}
\mathcal{L}_{\mathrm{Laplace}} (L^*_A, L^*_{ZAZ})
= \frac{
\bigl\|\bigl[\lambda_1,\dots,\lambda_{k_1}\bigr]_{{L^*_{A}}}
    - \bigl[\lambda_1,\dots,\lambda_{k_1}\bigr]_{{L^*_{ZAZ}}}
\bigr\|_2
}{
\sum_{i, j: \, i \ne j}^n [L^*_A]_{ij}
},
\label{eq:laplace}
\end{equation}
where $L^*_{A}$ is the shifted Laplacian of the original adjacency matrix $A$, while $L^*_{ZAZ}$ is the shifted Laplacian of the generated adjacency matrix $ZAZ$.
The summation term in the denominator is introduced to normalize the numerator of the loss function. This normalization is motivated by \textit{Gerschgorin's Circle Theorem}, which provides bounds on the location of the eigenvalues of a matrix~\cite{varga2010gershgorin}.
In the specific case where the matrix is symmetric and positive definite, all eigenvalues are real and positive. This implies that the lower bound of the spectrum is zero. The use of the summation in the denominator thus ensures that the scale of the loss is properly adjusted, preventing unbounded growth due to large row sums (which influence the Gerschgorin discs), and guarantees numerical stability by keeping the loss within a meaningful range.

Beyond the spectral structure of the graph Laplacian, we also consider the alignment of the latent feature space induced by \model. Specifically, we introduce an auxiliary term that penalizes spectral discrepancies between the input features $X \in \mathbb{R}^{n \times f}$ and the final feature representation $ZX \in \mathbb{R}^{r_T \times p_T}$.

Define the shifted Gram matrices:
\begin{equation}
M^*_X = X^\top X + \alpha_2 I \,,
\qquad
M^*_{ZX} = (ZX)^\top ZX + \alpha_2 I
= X^\top Z X + \alpha_2 I
\,,
\label{eq:shifted-gram}
\end{equation}
where $\alpha_2 > 0$ ensures that both matrices are positive definite. Let $[\lambda_1 \geq \lambda_2 \geq \ldots]_{M^*_X}$ and $[\lambda_1 \geq \lambda_2 \geq \ldots]_{M^*_{ZX}}$ denote their ordered eigenvalues.
A loss component, similar to Equation~\ref{eq:laplace}, comparing the top $k_2$ eigenvalues of $M^*_X$ and $M^*_{H_T}$, can be defined:
\begin{equation}
\mathcal{L}_{\mathrm{Gram}}(M_X, M_{ZX})
=\frac{\bigl\|\bigl[\lambda_1,\dots,\lambda_{k_2}\bigr]_{{M^*_{X}}}
    - \bigl[\lambda_1,\dots,\lambda_{k_2}\bigr]_{{M^*_{ZX}}}
\bigr\|_2
}{\sum_{i,j: \, i \ne j}^n \left| [M^*_X]_{ij} \right|}\, .
\label{eq:gram}
\end{equation}
where the denominator ensures, once again, the normalization by the Gershgorin radius.
This term encourages the dominant modes of variation in the learned features to match those of the original input, and, as a consequence, it serves as a regularizer, promoting the preservation of global structure and expressivity in the learned feature space.

The \fullloss (\loss) is defined as a weighted combination of the Laplacian and Gram alignment losses introduced in Equations~\ref{eq:laplace} and~\ref{eq:gram}:
\begin{equation}
\mathcal{L}(L^*_A, M^*_X, L^*_{ZAZ}, M^*_{ZX})
= 1 - e^{-\mathcal{L}_{\mathrm{Laplace}}(L^*_A, L^*_{ZAZ})}
+ \beta \left(1 - e^{-\mathcal{L}_{\mathrm{Gram}}(M^*_X, M^*_{ZX})} \right) \, ,
\end{equation}
where $\beta > 0$ controls the trade-off between preserving the input graph's topology and retaining the feature structure, while the exponential functions contribute to bounding the loss terms in the range $(0, 1]$.
In the specific setting of node-level sparsification, where the input and output graphs share the same dimensions, a regularization term is added to discourage trivial identity mappings:
\begin{equation}
\mathcal{L}_{\mathrm{Spar}}(L^*_A, M^*_X, L^*_{ZAZ}, M^*_{ZX}) =
\mathcal{L}(L^*_A, M^*_X, L^*_{ZAZ}, M^*_{ZX}) + \frac{\lambda}{n} \, \mathrm{tr}(Z) \, ,
\label{eq:loss-spar}
\end{equation}
where $\lambda > 0$ is a regularization coefficient that controls the degree of sparsification introduced by the network in the generated graph. The trace term, $\mathrm{tr}(Z) = \sum_i Z_{ii}$, penalizes the number of selected nodes, thereby promoting compact subgraph generations and reducing the risk of trivially replicating the input.

A deep and comprehensive discussion about the motivation, stability, and time and space complexity of \model is provided in Appendix~\ref{app:intuition}.

\section{Experimental Test-bed and Results}

We evaluate the proposed approach on five real-world attributed graphs: Cora, Citeseer, Actors, PubMed and Twitch-EN.
A summary of their topological statistics is reported in Table~\ref{tab:rw_summary}, while the descriptions and the links to access them are reported in Appendix~\ref{app:data}.
\begin{table}
\tiny
\centering
\caption{Real-world datasets.}
\label{tab:rw_summary}
\begin{tabular}{|c|c|c|c|c|c|} \hline
{\bf Dataset} & {\bf Graph Type} & {\bf Nodes} & {\bf Edges} & {\bf Attributes} & {\bf Type}\\ \hline
Cora & Citation & 2,708  & 5,429 & 1,433 & Directed \\ \hline
Citeseer & Citation & 3,312 & 4,591 & 3,703 & Directed\\ \hline
Actors & Co-occurrence & 7,600 & 29,926 & 932 & Directed \\ \hline
PubMed & Citation & 19,717 & 88,648 & 500 & Undirected  \\ \hline
Twitch-EN & Social & 7,126  & 70,648 & 128 & Undirected   \\ \hline
\end{tabular}
\end{table}
\\
\\
To validate the proposed method, we first conducted a graph-level analysis comparing the original graph with its sparsified version produced by our approach, focusing on quantitative measures. In this analysis, we considered two categories of metrics: \textit{connection-based}, which capture both local properties such as node degree and global properties related to clustering or community structure, and \textit{spectral-based}, derived from the eigenvalues and eigenvectors of the graph. The connection-based metrics include the size of the Largest Connected Component (LCC) $n_{LCC}$, the average node degree $\bar{k}$ as well as the average in-degree $\bar{k}_{in}$ and out-degree $\bar{k}_{out}$, and the modularity $M$. While the spectral measures include the Minimum Absolute Spectral Similarity (MASS) $\delta_{min}$ and the epidemic threshold $\tau_c$. The description of each metric is provided in Appendix \ref{sec:metrics}.

In a second set of experiments, we compared our method against existing sparsification techniques. Specifically, we considered: (i) Random Uniform Sparsifier (RUS), which randomly samples edges from the adjacency matrix to construct a sparsified graph; (ii) Spielman Sparsifier (SS)~\cite{spielman2011graph}, which relies on effective resistance values of edges for sparsification; (iii) the KSJ (Jaccard Similarity) and KSCT (Common Triangles) methods proposed in~\cite{kim2022link}, which measure edge importance to guide sparsification; and (iv) D-Spar~\cite{liu2023dspar}, a neural-based sparsification approach.
Since our experiments cover both directed and undirected graphs, all compared methods were adapted to properly account for edge directionality.
Further details on these approaches are provided in Appendix~\ref{sec:competitors}.

\textbf{Results.} Table~\ref{tab:sparsified_metrics} shows how the structural and spectral properties of the graphs evolve after sparsification via {\model}, under different reduction levels (i.e., number of preserved eigenvalues $\lambda$), reporting the mean and standard deviation over $10$ runs. To show which topological traits are preserved or altered with respect to the original graph, we also report the reference metric values computed on the input graph (shown in the row immediately above each dataset’s sparsified results).

After sparsifying with {\model}, in Cora,  the size of the largest connected component does not decrease monotonically with the number of retained eigenvalues. This is due to the non-monotonic number of edges preserved by the sparsification procedure: for higher numbers of eigenvalues (e.g., 32), more edges are selected compared to some intermediate cases, which allows additional nodes to remain connected or rejoin the LCC. The average degrees decrease proportionally with the reduction level, while the graph retains its modular structure, as evidenced by stable modularity scores. Also the MASS remains relatively high (above 0.65), approaching 0.85 for larger numbers of eigenvalues. This indicates that even after sparsification, the spectral structure of Cora is largely preserved. The epidemic threshold is preserved hence demonstrating that {\model} keeps the network robustness level of the original graph. 

Over Citeseer, {\model} achieves effective sparsification while maintaining the core structure of the graph. 
The LCC size and the average degree are reduced as expected, but the size of the largest connected component decreases as the number of the eigenvalues increases. However, the main connected component still contains a significant portion of nodes. Modularity remains relatively unchanged, suggesting that the community structure is preserved. Accordingly, the MASS stays above 0.71, showing that the sparsified graphs retain a substantial part of the original spectral characteristics, with only moderate deviation. Also on this citation network the functional robustness of the sparsified graph remains stable. 

Also on Actors, despite the sparsification inducted, the modularity remains stable, indicating that community structures are largely preserved. The MASS values are consistently high (above 0.91 for the smallest eigenvalue counts), showing that the spectral properties of the network are well maintained. The epidemic threshold is again preserved showing that the sparsification process does not significantly affect the network's key dynamical properties.

In PubMed, the LCC size decreases proportionally with the reduction in the number of edges, while the average degree similarly decreases. Modularity remains again stable across sparsification levels, MASS values, however, are lower compared to the other smaller datasets. This is expected given the large size and density of the graph: sparsification with few retained eigenvalues removes a substantial fraction of edges, inducing more pronounced deviations in the spectral structure, and thus a lower minimum abstract spectral similarity. The epidemic threshold shows a minor increase: this minor change, typical when sparsifying large networks \cite{kuga2022effects}, is due to a small reduction in the largest eigenvalue of the adjacency matrix, reflecting a minimal loss in the network’s diffusion capacity. Overall, the sparsification preserves the robustness of the network.

For Twitch-EN, the LCC size and average degree both decrease as expected with stronger sparsification. The network maintains a relatively low modularity but consistent with the original, reflecting its weak community structure. MASS values remain above 0.73, indicating that the main spectral characteristics are preserved. Finally, also for this dataset, the epidemic threshold remains stable.

The results in Table \ref{tab:comparisonMass} demonstrate that {\model} consistently achieves high MASS values across all datasets, particularly on Cora, Citeseer, and Pubmed, effectively preserving the original spectral structure compared to locally-based methods (KSJ, KSCT) and D-Spar, which show much lower values in many cases. This indicates that the reduction performed by {\model} maintains the global properties of the graph, which is critical for tasks such as community detection or information propagation. Compared to RUS, {\model} is more stable, especially on datasets like Actors where random edge selection leads to higher variance, while Spielman Sparsifier (SS) performs well as expected for a spectral method, yet {\model} is often competitive or superior, particularly at medium-to-high values of $\lambda$ (8–32), highlighting the effectiveness of its spectral regularization component. Local attribute-based variants such as KSJ and KSCT generally achieve lower MASS on datasets like Citeseer and Pubmed, indicating that purely local methods struggle to preserve the global characteristics of large graphs, whereas {\model} maintains consistent values thanks to its joint transformation of topology and node features. D-Spar shows very low MASS values on Cora and Citeseer, demonstrating that, while useful for GNN preprocessing, it does not preserve the global structure of the sparsified graphs, unlike {\model}, which produces graphs that remain faithful to the original. Finally, {\model} maintains relatively high MASS even for small numbers of eigenvalues ($\lambda$ = 2–4), showing that a good global representation can be retained with few spectral dimensions, while larger values of $\lambda$ (16–32) result in stable or improved performance, confirming the model’s ability to leverage additional spectral information without introducing noise.

\begin{table*}[t]
\centering
\scriptsize
\caption{{\model} graph quantitative measures computed for different numbers of eigenvalues.}
\label{tab:sparsified_metrics}
\resizebox{.7\textwidth}{!}{
\begin{tabular}{|l|l|l|l|l|l|l|l|l|l|}
\hline
\textbf{Dataset}  & \textbf{\# of $\lambda$} & $n_{\texttt{edges}}$ & $n_{LCC}$ & $\bar{k}$ & $\bar{k}_{in}$ & $\bar{k}_{out}$ & $M$ & $\delta_{min}$ & $\tau_c$ \\ 
\hline
\multirow{6}{*}{Cora} 
    & - & 5,429 & 2,485 & 4.01 & 2.00 & 2.00 & 0.82 & - & 0.07 \\ \cline{2-10}
    & 2 & 3,599 $\pm$ 60 & 1,810 $\pm$ 33 & 2.66 $\pm$ 0.05 & 1.33 $\pm$ 0.02 & 1.33 $\pm$ 0.02 & 0.82 $\pm$ 0.01 & 0.65 $\pm$ 0.09 & 0.07 $\pm$ 0.00 \\ \cline{2-10}
    & 4 & 3,645 $\pm$ 67 & 1,828 $\pm$ 15 & 2.69 $\pm$ 0.05 & 1.35 $\pm$ 0.03 & 1.35 $\pm$ 0.03 & 0.82 $\pm$ 0.01 & 0.75 $\pm$ 0.10 & 0.07 $\pm$ 0.00 \\ \cline{2-10}
    & 8 & 3,465 $\pm$ 79 & 1,745 $\pm$ 30 & 2.56 $\pm$ 0.06 & 1.28 $\pm$ 0.03 & 1.28 $\pm$ 0.03 & 0.81 $\pm$ 0.01 & 0.80 $\pm$ 0.07 & 0.07 $\pm$ 0.00 \\ \cline{2-10}
    & 16 & 3,067 $\pm$ 39 & 1,559 $\pm$ 22 & 2.27 $\pm$ 0.03 & 1.13 $\pm$ 0.01 & 1.13 $\pm$ 0.01 & 0.80 $\pm$ 0.01 & 0.80 $\pm$ 0.08 & 0.07 $\pm$ 0.00 \\ \cline{2-10}
    & 32 & 3,551 $\pm$ 44 & 1,753 $\pm$ 25 & 2.62 $\pm$ 0.03 & 1.31 $\pm$ 0.02 & 1.31 $\pm$ 0.02 & 0.81 $\pm$ 0.01 & 0.85 $\pm$ 0.06 & 0.07 $\pm$ 0.00 \\ 
\hline
\multirow{6}{*}{Citeseer} 
    & - & 4,591 & 2,110 & 2.77 & 1.39 & 1.39 & 0.89 & - & 0.07 \\ \cline{2-10}
    & 2 & 3,396 $\pm$ 59 & 1,499 $\pm$ 64 & 2.05 $\pm$ 0.04 & 1.03 $\pm$ 0.02 & 1.03 $\pm$ 0.02 & 0.89 $\pm$ 0.00 & 0.71 $\pm$ 0.11 & 0.07 $\pm$ 0.00 \\ \cline{2-10}
    & 4 & 3,357 $\pm$ 52 & 1,482 $\pm$ 64 & 2.03 $\pm$ 0.03 & 1.01 $\pm$ 0.02 & 1.01 $\pm$ 0.02 & 0.89 $\pm$ 0.00 & 0.73 $\pm$ 0.11 & 0.07 $\pm$ 0.00 \\ \cline{2-10}
    & 8 & 3,181 $\pm$ 55 & 1,388 $\pm$ 61 & 1.92 $\pm$ 0.03 & 0.96 $\pm$ 0.02 & 0.96 $\pm$ 0.02 & 0.88 $\pm$ 0.00 & 0.76 $\pm$ 0.11 & 0.07 $\pm$ 0.00 \\ \cline{2-10}
    & 16 & 2,812 $\pm$ 49 & 1,192 $\pm$ 44 & 1.70 $\pm$ 0.03 & 0.85 $\pm$ 0.02 & 0.85 $\pm$ 0.02 & 0.87 $\pm$ 0.00 & 0.76 $\pm$ 0.10 & 0.07 $\pm$ 0.00 \\ \cline{2-10}
    & 32 & 2,216 $\pm$ 32 & 908 $\pm$ 38 & 1.34 $\pm$ 0.02 & 0.67 $\pm$ 0.01 & 0.67 $\pm$ 0.01 & 0.86 $\pm$ 0.01 & 0.75 $\pm$ 0.10 & 0.07 $\pm$ 0.00 \\ 
\hline
\multirow{6}{*}{Actors} 
    & - & 29,926 & 7,600 & 7.88 & 3.94 & 3.94 & 0.51 & - & 0.03 \\ \cline{2-10}
    & 2 & 18,583 $\pm$ 979 & 5,299 $\pm$ 164 & 4.89 $\pm$ 0.26 & 2.45 $\pm$ 0.13 & 2.45 $\pm$ 0.13 & 0.52 $\pm$ 0.01 & 0.91 $\pm$ 0.01 & 0.03 $\pm$ 0.00 \\ \cline{2-10}
    & 4 & 16,614 $\pm$ 5,877 & 4,548 $\pm$ 1,404 & 4.37 $\pm$ 1.55 & 2.19 $\pm$ 0.77 & 2.19 $\pm$ 0.77 & 0.50 $\pm$ 0.01 & 0.92 $\pm$ 0.01 & 0.03 $\pm$ 0.00 \\ \cline{2-10}
    & 8 & 20,814 $\pm$ 1,294 & 5,756 $\pm$ 265 & 5.48 $\pm$ 0.34 & 2.74 $\pm$ 0.17 & 2.74 $\pm$ 0.17 & 0.52 $\pm$ 0.01 & 0.93 $\pm$ 0.01 & 0.03 $\pm$ 0.00 \\ \cline{2-10}
    & 16 & 20,085 $\pm$ 371 & 5,641 $\pm$ 68 & 5.29 $\pm$ 0.10 & 2.64 $\pm$ 0.05 & 2.64 $\pm$ 0.05 & 0.52 $\pm$ 0.00 & 0.94 $\pm$ 0.01 & 0.03 $\pm$ 0.00 \\ \cline{2-10}
    & 32 & 20,323 $\pm$ 227 & 5,738 $\pm$ 44 & 5.35 $\pm$ 0.06 & 2.67 $\pm$ 0.03 & 2.67 $\pm$ 0.03 & 0.53 $\pm$ 0.00 & 0.94 $\pm$ 0.01 & 0.03 $\pm$ 0.00 \\ 
\hline
\multirow{6}{*}{PubMed} 
    & - & 44,324 & 19,717 & 4.50 & 4.50 & 4.50 & 0.77 & - & 0.04 \\ \cline{2-10}
    & 2 & 21,629 $\pm$ 734 & 10,812 $\pm$ 133 & 2.19 $\pm$ 0.07 & 2.19 $\pm$ 0.07 & 2.19 $\pm$ 0.07 & 0.78 $\pm$ 0.00 & 0.40 $\pm$ 0.13 & 0.05 $\pm$ 0.00 \\ \cline{2-10}
    & 4 & 23,478 $\pm$ 668 & 11,109 $\pm$ 119 & 2.38 $\pm$ 0.07 & 2.38 $\pm$ 0.07 & 2.38 $\pm$ 0.07 & 0.77 $\pm$ 0.01 & 0.44 $\pm$ 0.11 & 0.05 $\pm$ 0.00 \\ \cline{2-10}
    & 8 & 30,823 $\pm$ 3,256 & 13,933 $\pm$ 1,490 & 3.13 $\pm$ 0.33 & 3.13 $\pm$ 0.33 & 3.13 $\pm$ 0.33 & 0.76 $\pm$ 0.01 & 0.52 $\pm$ 0.14 & 0.05 $\pm$ 0.00 \\ \cline{2-10}
    & 16 & 29,086 $\pm$ 2,800 & 13,677 $\pm$ 1,139 & 2.95 $\pm$ 0.28 & 2.95 $\pm$ 0.28 & 2.95 $\pm$ 0.28 & 0.77 $\pm$ 0.01 & 0.51 $\pm$ 0.14 & 0.05 $\pm$ 0.00 \\ \cline{2-10}
    & 32 & 22,844 $\pm$ 844 & 11,239 $\pm$ 331 & 2.32 $\pm$ 0.09 & 2.32 $\pm$ 0.09 & 2.32 $\pm$ 0.09 & 0.78 $\pm$ 0.01 & 0.48 $\pm$ 0.20 & 0.05 $\pm$ 0.00 \\ 
\hline
\multirow{6}{*}{Twitch-EN} 
    & - & 35,324 & 7,126 & 9.91 & 9.91 & 9.91 & 0.45 & - & 0.02 \\ \cline{2-10}
    & 2 & 24,790 $\pm$ 526 & 5,013 $\pm$ 84 & 6.96 $\pm$ 0.15 & 6.96 $\pm$ 0.15 & 6.96 $\pm$ 0.15 & 0.44 $\pm$ 0.01 & 0.73 $\pm$ 0.15 & 0.02 $\pm$ 0.00 \\ \cline{2-10}
    & 4 & 23,906 $\pm$ 721 & 4,604 $\pm$ 138 & 6.71 $\pm$ 0.20 & 6.71 $\pm$ 0.20 & 6.71 $\pm$ 0.20 & 0.44 $\pm$ 0.01 & 0.74 $\pm$ 0.14 & 0.02 $\pm$ 0.00 \\ \cline{2-10}
    & 8 & 25,768 $\pm$ 3,368 & 4,838 $\pm$ 777 & 7.23 $\pm$ 0.95 & 7.23 $\pm$ 0.95 & 7.23 $\pm$ 0.95 & 0.43 $\pm$ 0.01 & 0.78 $\pm$ 0.10 & 0.02 $\pm$ 0.00 \\ \cline{2-10}
    & 16 & 25,659 $\pm$ 4,675 & 4,846 $\pm$ 1,078 & 7.20 $\pm$ 1.31 & 7.20 $\pm$ 1.31 & 7.20 $\pm$ 1.31 & 0.44 $\pm$ 0.01 & 0.85 $\pm$ 0.07 & 0.02 $\pm$ 0.00 \\ \cline{2-10}
    & 32 & 24,915 $\pm$ 2,443 & 4,519 $\pm$ 633 & 6.99 $\pm$ 0.69 & 6.99 $\pm$ 0.69 & 6.99 $\pm$ 0.69 & 0.44 $\pm$ 0.01 & 0.86 $\pm$ 0.03 & 0.02 $\pm$ 0.00 \\ 
\hline
\end{tabular}}
\end{table*}

\begin{table*}[t]
\centering
\scriptsize
\caption{Comparison with other state-of-the-art sparsification methods in terms of MASS. For all the sparsifiers, the number of network links that are kept, i.e., the sparsification threshold, is the same adopted by our sparsifier. }
\label{tab:comparisonMass}
\resizebox{.7\textwidth}{!}{
\begin{tabular}{|l|l|l|l|l|l|l|l|}
\hline
\textbf{Dataset} & \textbf{\# of $\lambda$} & RUS & SS & KSJ & KSCT & D-SPAR & \model\\ \hline

\multirow{5}{*}{Cora} 
& 2 & 0.55 $\pm$ 0.03 & 0.65 $\pm$ 0.00 & 0.64 $\pm$ 0.01 & 0.54 $\pm$ 0.01 & 0.18 $\pm$ 0.00 & 0.65 $\pm$ 0.09\\ \cline{2-8}
& 4 & 0.55 $\pm$ 0.04 & 0.75 $\pm$ 0.00 & 0.73 $\pm$ 0.01 & 0.65 $\pm$ 0.02 & 0.18 $\pm$ 0.00 & 0.75 $\pm$ 0.10\\ \cline{2-8}
& 8 & 0.54 $\pm$ 0.03 & 0.76 $\pm$ 0.01 & 0.72 $\pm$ 0.00 & 0.64 $\pm$ 0.01 & 0.18 $\pm$ 0.00 & 0.80 $\pm$ 0.07\\ \cline{2-8}
& 16 & 0.45 $\pm$ 0.04 & 0.78 $\pm$ 0.01 & 0.74 $\pm$ 0.01 & 0.67 $\pm$ 0.02 & 0.20 $\pm$ 0.00 & 0.80 $\pm$ 0.08\\ \cline{2-8}
& 32 & 0.56 $\pm$ 0.04 & 0.83 $\pm$ 0.00 & 0.80 $\pm$ 0.01 & 0.63 $\pm$ 0.01 & 0.18 $\pm$ 0.00 & 0.85 $\pm$ 0.06 \\ \hline

\multirow{5}{*}{Citeseer} 
& 2 & 0.42 $\pm$ 0.03 & 0.61 $\pm$ 0.00 & 0.52 $\pm$ 0.00 & 0.52 $\pm$ 0.00 & 0.21 $\pm$ 0.00 & 0.71 $\pm$ 0.11\\ \cline{2-8}
& 4 & 0.41 $\pm$ 0.02 & 0.61 $\pm$ 0.01 & 0.52 $\pm$ 0.00 & 0.52 $\pm$ 0.00 & 0.22 $\pm$ 0.01 & 0.73 $\pm$ 0.11\\ \cline{2-8}
& 8 & 0.40 $\pm$ 0.03 & 0.61 $\pm$ 0.01 & 0.52 $\pm$ 0.00 & 0.52 $\pm$ 0.00 & 0.21 $\pm$ 0.00 & 0.76 $\pm$ 0.11\\ \cline{2-8}
& 16 & 0.39 $\pm$ 0.05 & 0.61 $\pm$ 0.01 & 0.52 $\pm$ 0.00 & 0.52 $\pm$ 0.00 & 0.18 $\pm$ 0.01 & 0.76 $\pm$ 0.10\\ \cline{2-8}
& 32 & 0.39 $\pm$ 0.05 & 0.61 $\pm$ 0.01 & 0.52 $\pm$ 0.00 & 0.52 $\pm$ 0.00 & 0.21 $\pm$ 0.00 & 0.75 $\pm$ 0.10\\ \hline

\multirow{5}{*}{Actors} 
& 2 & 0.63 $\pm$ 0.04 & 0.82 $\pm$ 0.00 & 0.41 $\pm$ 0.00 & 0.83 $\pm$ 0.00 & 0.73 $\pm$ 0.00 & 0.91 $\pm$ 0.01\\ \cline{2-8}
& 4 & 0.59 $\pm$ 0.21 & 0.58 $\pm$ 0.47 & 0.47 $\pm$ 0.00 & 0.83 $\pm$ 0.03 & 0.71 $\pm$ 0.02 & 0.92 $\pm$ 0.01\\ \cline{2-8}
& 8 & 0.71 $\pm$ 0.04 & 0.82 $\pm$ 0.00 & 0.41 $\pm$ 0.00 & 0.86 $\pm$ 0.00 & 0.73 $\pm$ 0.01 & 0.93 $\pm$ 0.01\\ \cline{2-8}
& 16 & 0.67 $\pm$ 0.02 & 0.59 $\pm$ 0.47 & 0.47 $\pm$ 0.45 & 0.57 $\pm$ 0.47 & 0.68 $\pm$ 0.15 & 0.94 $\pm$ 0.01\\ \cline{2-8}
& 32 & 0.68 $\pm$ 0.02 & 0.82 $\pm$ 0.00 & 0.58 $\pm$ 0.22 & 0.88 $\pm$ 0.00 & 0.76 $\pm$ 0.02 & 0.94 $\pm$ 0.01\\ \hline

\multirow{5}{*}{Pubmed} 
& 2  & 0.40 $\pm$ 0.08 & 0.40 $\pm$ 0.00 & 0.40 $\pm$ 0.10 & 0.38 $\pm$ 0.13 & 0.02 $\pm$ 0.02 & 0.40 $\pm$ 0.13\\ \cline{2-8}
& 4  & 0.40 $\pm$ 0.01 & 0.41 $\pm$ 0.09 & 0.41 $\pm$ 0.00 & 0.41 $\pm$ 0.02 & 0.02 $\pm$ 0.02 & 0.44 $\pm$ 0.11\\ \cline{2-8}
& 8  & 0.49 $\pm$ 0.06 & 0.46 $\pm$ 0.00 & 0.48 $\pm$ 0.04 & 0.47 $\pm$ 0.08 & 0.07 $\pm$ 0.07 & 0.52 $\pm$ 0.14\\ \cline{2-8}
& 16 & 0.44 $\pm$ 0.02 & 0.48 $\pm$ 0.15 & 0.41 $\pm$ 0.03 & 0.41 $\pm$ 0.03 & 0.04 $\pm$ 0.04 & 0.51 $\pm$ 0.14\\ \cline{2-8}
& 32 & 0.38 $\pm$ 0.06 & 0.42 $\pm$ 0.01 & 0.41 $\pm$ 0.19 & 0.41 $\pm$ 0.10 & 0.02 $\pm$ 0.02 & 0.48 $\pm$ 0.20\\ \hline

\multirow{5}{*}{Twitch-EN} 
& 2 & 0.61 $\pm$ 0.08 & 0.70 $\pm$ 0.20 & 0.02 $\pm$ 0.00 & 0.33 $\pm$ 0.08 & 0.35 $\pm$ 0.07 & 0.73 $\pm$ 0.15\\ \cline{2-8}
& 4 & 0.58 $\pm$ 0.02 & 0.62 $\pm$ 0.00 & 0.01 $\pm$ 0.00 & 0.34 $\pm$ 0.00 & 0.34 $\pm$ 0.00 & 0.74 $\pm$ 0.14\\ \cline{2-8}
& 8 & 0.58 $\pm$ 0.05 & 0.72 $\pm$ 0.14 & 0.01 $\pm$ 0.00 & 0.34 $\pm$ 0.04 & 0.35 $\pm$ 0.02 & 0.78 $\pm$ 0.10\\ \cline{2-8}
& 16 & 0.55 $\pm$ 0.02 & 0.72 $\pm$ 0.00 & 0.01 $\pm$ 0.00 & 0.33 $\pm$ 0.03 & 0.34 $\pm$ 0.00 & 0.85 $\pm$ 0.07 \\ \cline{2-8}
& 32 & 0.60 $\pm$ 0.07 & 0.72 $\pm$ 0.16 & 0.01 $\pm$ 0.00 & 0.34 $\pm$ 0.06 & 0.35 $\pm$ 0.05 & 0.86 $\pm$ 0.03\\ \hline
\end{tabular}}
\end{table*}

\section{Related Work}
\label{sec:related}

Our contributions address two complementary aspects of graph learning: (i) the design of a novel neural layer that jointly embeds node features and structural information, and (ii) a loss function for spectral sparsification that removes nodes while preserving global properties. We therefore organize the related work into two groups: methods for \emph{graph embeddings and joint structure–feature learning}, and approaches to \emph{graph sparsification}.

\subsection{Graph Embeddings and Joint Structure–Feature Learning}
Learning expressive node embeddings has been a cornerstone of graph representation learning. Early unsupervised models such as DeepWalk~\cite{perozzi2014deepwalk} and node2vec~\cite{grover2016node2vec} rely on random walks to capture local connectivity patterns, but they neglect node attributes and provide no control over graph structure. Spectral clustering~\cite{vonluxburg2007tutorial} similarly embeds nodes in eigenspaces of the Laplacian, but operates on fixed graphs and lacks feature integration.

Message-passing neural networks, including GCN~\cite{DBLP:conf/iclr/KipfW17}, GraphSAGE~\cite{hamilton2017inductive}, and GAT~\cite{velivckovic2018graph}, combine structural neighborhoods with node features through aggregation schemes. These models enable inductive learning and leverage both topology and attributes, but they assume static input graphs and suffer from oversmoothing in deeper layers~\cite{li2018deeper,DBLP:conf/iclr/OonoS20}. Moreover, structure and features are typically entangled into a single embedding space, limiting flexibility. Extensions such as DropEdge~\cite{rong2020dropedge} or attention-weight pruning~\cite{velivckovic2018graph} introduce heuristic sparsification, but without principled guarantees.

Our proposed layer departs from these approaches by \emph{jointly learning embeddings and structural transformations}. Through bilinear reparameterizations of the Laplacian, it synthesizes adaptive graph topologies that are not restricted to subgraphs of the input. This allows the model to discover intermediate structures aligned with both node features and spectral properties, providing richer and more flexible embeddings than static or purely feature-agnostic methods.

\subsection{Graph Sparsification}
Graph reduction techniques can be broadly divided into sparsification, coarsening, and condensation~\cite{hashemi2024comprehensive}. We focus on sparsification, which seeks sparse graphs that approximate the original structure while reducing complexity.

\textbf{Classical and spectral methods.}  
\cite{benczur1996approximating} introduced cut-preserving sparsifiers, while \cite{spielman2011graph} and \cite{batson2013spectral} developed nearly-linear algorithms sampling edges according to effective resistance. These approaches preserve Laplacian spectra and commute times with strong guarantees, but rely on costly pseudoinverses and do not scale easily. Extensions address weighted, directed, and dynamic graphs~\cite{kapralov2014single}, yet remain detached from learning objectives.

\textbf{Heuristic and geometric pruning.}  
Simpler approaches remove weak or redundant edges by weight thresholding~\cite{Fortunato2018}, neighborhood similarity~\cite{satuluri2011local}, or community-preserving heuristics~\cite{leskovec2009community}. Backbone extraction methods such as Noise-Corrected filtering~\cite{coscia2017network,coscia2019impact} retain statistically significant edges, while Ricci curvature~\cite{ricci_curvature_sparsification} or walk-based pruning~\cite{razin2021ability} exploit local geometry or stochastic connectivity. These methods are efficient but heuristic, offering no formal control over spectral preservation.

\textbf{Neural sparsification.}  
Recent models integrate sparsification into learning pipelines. NeuralSparse~\cite{pmlr-v119-zheng20d} learns edge scores for supervised tasks, but outputs strict subgraphs tied to labels. GSGAN~\cite{9338361} uses adversarial training to preserve communities via random walks, while GraphSAINT~\cite{DBLP:conf/iclr/ZengZSKP20} samples subgraphs for mini-batch training. PRI~\cite{pmlr-v180-yu22c} matches Laplacian spectra through Jensen–Shannon divergence, but fixes graph size and requires large matrices. DSpar~\cite{liu2023dspar} approximates effective resistance by node degrees to accelerate training. While effective, these models rely on supervision, heuristics, or restricted formulations.

\textbf{Our sparsification.}  
In contrast, our approach formulates sparsification as a \emph{spectral alignment problem} with feature integration. A Laplacian-based loss preserves global spectral properties, a Gram-matrix loss enforces feature geometry alignment, and a trace penalty provides explicit sparsity control. This differentiable formulation enables node-level pruning within end-to-end training, offering a general and unsupervised alternative to heuristic, task-specific, or structure-only methods.

Unlike prior work in the literature, we are able to provide both the adjacency matrix and the feature matrix in a way that remains consistent with the intrinsic properties of the nodes. The only exception occurs when the features are purely structural, in which case they can be recomputed from the reduced adjacency matrix.

\section{Conclusion and Future Work}
\label{sec:conclusion}
We introduced \fullmodel (\model), a novel neural architecture that stacks \fulllayer (\layer) layers to jointly evolve both a graph structure and its node representations. The model is equipped with a new loss function, \fullloss (\loss), which enables principled node-level sparsification by aligning structural and feature spectra. By reparameterizing the graph Laplacian, \model preserves global properties while overcoming the rigidity of static message passing that characterizes the existing graph neural network literature. Empirically, our method outperforms current state-of-the-art approaches on standard benchmarks, particularly under the MASS metric, demonstrating the effectiveness of spectrum-driven sparsification.

This work opens several promising directions for future research. First, beyond node pruning, the \layer layer naturally supports \emph{graph condensation}: rather than selecting subsets of the original graph, it can synthesize entirely new graphs and feature matrices that retain the information content of the input data. Second, extending the formulation beyond square adjacency matrices would allow \layer to operate on heterogeneous relational data, where multiple groups of objects (potentially belonging to different domains and containing varying numbers of elements) interact through non-square incidence patterns. Such a generalization would substantially broaden the applicability of our framework to domains ranging from multi-relational networks to cross-modal representation learning.

\section*{Reproducibility Statement}
To ensure reproducibility of our results, we provide both theoretical and experimental support. Intuitions, motivation, formal proofs of the main theorems and additional derivations are included in the appendix, which clarify the assumptions, the applicability domain, and the limitations of the proposed model. For the experimental validation, we release the full implementation of our method, together with the preprocessing pipeline and training scripts, available at \url{https://anonymous.4open.science/r/CA43}. These materials allow independent researchers to reproduce the reported results and explore further applications of our approach.

\printbibliography

\appendix

\section{Stability of the \fulllayer Layer}
\label{app:stability}
\begin{theorem}
Let $Q\in\mathbb{R}^{r\times s}$ be any real matrix.  Define its row– and column–absolute sums by
$$
u_i = \sum_{j=1}^s \lvert Q_{ij}\rvert,
\quad
v_j = \sum_{i=1}^r \lvert Q_{ij}\rvert.
$$
Form the diagonal scaling matrices $U\in\mathbb{R}^{r\times r}$ and $V\in\mathbb{R}^{s\times s}$ via
$$
U_{ii} =
\begin{cases}
1/\sqrt{u_i}, & u_i > 0,\\
0, & u_i = 0,
\end{cases}
\qquad
V_{jj} =
\begin{cases}
1/\sqrt{v_j}, & v_j > 0,\\
0, & v_j = 0.
\end{cases}
$$
Then the normalized matrix
$$
\widehat{Q} = U\,Q\,V
$$
satisfies
$$
\|\widehat{Q}\|_{\mathrm{op}} \le 1,
$$
i.e.\ $\widehat{Q}$ is non‑expansive in the Euclidean norm, indicating by $\| \widehat{Q} \|_{\mathrm{op}}$ the induced spectral operator norm, i.e., the square root of the largest eigenvalue of $\widehat{Q}^\top \, \widehat{Q}$.
\end{theorem}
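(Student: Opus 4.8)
The plan is to work from the variational characterization of the operator norm, $\|\widehat{Q}\|_{\mathrm{op}} = \sup_{\|x\|_2 = 1}\|\widehat{Q}x\|_2$, which is equivalent to the eigenvalue definition given in the statement, and to show directly that $\|\widehat{Q}x\|_2 \le \|x\|_2$ for every $x \in \mathbb{R}^s$. Before the main estimate I would dispose of the degenerate indices: if $u_i = 0$ then every entry of the $i$-th row of $Q$ has zero absolute value, so that row of $Q$—and hence of $\widehat{Q}$—vanishes and contributes nothing; symmetrically, a column with $v_j = 0$ is identically zero. Thus I may restrict all sums below to indices with $u_i > 0$ and $v_j > 0$, on which $U_{ii} = 1/\sqrt{u_i}$ and $V_{jj} = 1/\sqrt{v_j}$ are genuine positive scalars.

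The heart of the argument is a single application of Cauchy–Schwarz. Writing out the $i$-th coordinate, $(\widehat{Q}x)_i = u_i^{-1/2}\sum_j v_j^{-1/2} Q_{ij} x_j$, so that
\[
\|\widehat{Q}x\|_2^2 = \sum_i \frac{1}{u_i}\Bigl(\sum_j \frac{Q_{ij}}{\sqrt{v_j}}\,x_j\Bigr)^2 .
\]
I would split each summand as $\frac{Q_{ij}}{\sqrt{v_j}}x_j = \sqrt{|Q_{ij}|}\cdot\bigl(\operatorname{sgn}(Q_{ij})\sqrt{|Q_{ij}|}\,v_j^{-1/2}x_j\bigr)$ and apply Cauchy–Schwarz in the index $j$, which yields
\[
\Bigl(\sum_j \frac{Q_{ij}}{\sqrt{v_j}}x_j\Bigr)^2 \le \Bigl(\sum_j |Q_{ij}|\Bigr)\Bigl(\sum_j \frac{|Q_{ij}|}{v_j}x_j^2\Bigr) = u_i\sum_j \frac{|Q_{ij}|}{v_j}x_j^2 .
\]

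Substituting this bound cancels the factor $1/u_i$ exactly, leaving $\|\widehat{Q}x\|_2^2 \le \sum_i\sum_j \frac{|Q_{ij}|}{v_j}x_j^2$. Since the sum is finite I may interchange the order of summation and use the definition $\sum_i |Q_{ij}| = v_j$ to collapse the inner sum, giving $\|\widehat{Q}x\|_2^2 \le \sum_j \frac{x_j^2}{v_j}\cdot v_j = \|x\|_2^2$. Taking the supremum over unit vectors gives $\|\widehat{Q}\|_{\mathrm{op}} \le 1$, as claimed.

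I expect the only real subtlety to be the design of the Cauchy–Schwarz split: the weight $\sqrt{|Q_{ij}|}$ must be distributed so that one factor reconstitutes the row sum $u_i$ (cancelling the normalization on the left) while the other, after reordering, reconstitutes the column sum $v_j$ (cancelling the normalization on the right). Everything else is bookkeeping, the degenerate zero-sum rows and columns being the one place requiring an explicit remark rather than pure computation. One could equivalently package the same estimate as an instance of Schur's test with weights $p_i = \sqrt{u_i}$ and $q_j = \sqrt{v_j}$, for which both test inequalities hold with constant $1$, but the direct Cauchy–Schwarz route is self-contained.
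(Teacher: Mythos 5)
Your proposal is correct and takes essentially the same route as the paper's own proof: the identical row-wise Cauchy--Schwarz estimate with the split $|Q_{ij}| = \sqrt{|Q_{ij}|}\cdot\sqrt{|Q_{ij}|}$, cancellation of the factor $1/u_i$, interchange of the order of summation, and collapse of $\sum_i |Q_{ij}| = v_j$; the paper merely substitutes $y = Vx$ at the outset rather than carrying $v_j^{-1/2}x_j$ explicitly through the computation. If anything, your explicit disposal of the degenerate indices is slightly more careful than the paper's, which asserts $x_j = 0$ whenever $v_j = 0$ (only $y_j = 0$ is guaranteed, giving an inequality $\le \|x\|_2^2$ rather than equality, which is all that is needed).
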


\begin{proof}
Recall that for any matrix $M$ the induced spectral operator norm is:
$$
\|M\|_{\mathrm{op}}
:= \sup_{x\neq 0}\frac{\|M x\|_2}{\|x\|_2}
= \sup_{\|x\|_2=1}\|M x\|_2,
$$
where $\|x\|_2=(\sum_j x_j^2)^{1/2}$ is the Euclidean norm.  It suffices to show $\|\widehat Q x\|_2\le1$ for all unit vectors $x\in\mathbb R^s$.
\\
\\
Let $y = V x$, then:
$$
\widehat{Q} x = U \, (Q \, y).
$$
Hence:
$$
\|\widehat{Q} x\|_2^2 = \sum_{i=1}^r U_{ii}^2 \left( \sum_{j=1}^s Q_{ij} y_j \right)^2.
$$
Since $U_{ii}^2 = 1/u_i$ when $u_i > 0$ and zero otherwise,
$$
\|\widehat{Q} x\|_2^2 = \sum_{i : u_i > 0} \frac{1}{u_i} \left( \sum_{j=1}^s Q_{ij} y_j \right)^2.
$$
Bounding each summand using the triangle inequality followed by Cauchy–Schwarz:
$$
\Bigl| \sum_{j=1}^s Q_{ij} y_j \Bigr|
\le \sum_{j=1}^s |Q_{ij}|\,|y_j|
= \sum_{j=1}^s \sqrt{|Q_{ij}|} \cdot \sqrt{|Q_{ij}|}\,|y_j|.
$$
Thus, applying Cauchy–Schwarz on nonnegative vectors:
$$
\left( \sum_{j=1}^s Q_{ij} y_j \right)^2
\le u_i \sum_{j=1}^s |Q_{ij}|\,y_j^2.
$$
and thus:
$$
\|\widehat Q\,x\|_2^2
\;\le\;
\sum_{i=1}^r \sum_{j=1}^s |Q_{ij}|\,y_j^2
\;=\;
\sum_{j=1}^s \Bigl(\sum_{i=1}^r |Q_{ij}|\Bigr)\,y_j^2
\;=\;
\sum_{j=1}^s v_j\,y_j^2.
$$
Finally, since $y_j = x_j/\sqrt{v_j}$ whenever $v_j > 0$ (and $x_j = y_j = 0$ if $v_j = 0$):
$$
\sum_{j=1}^s v_j\,y_j^2
= \sum_{j=1}^s x_j^2
= \|x\|_2^2
= 1.
$$
Hence $\|\widehat Q\,x\|_2^2\le1$ for all unit $x$, and taking the supremum yields $\|\widehat Q\|_{\mathrm{op}}\le1$, as claimed.
\end{proof}

\section{Properties of the Laplacian Matrix}
\label{app:laplacian-properties}
\begin{theorem}
Let $B\in\mathbb{R}^{n\times m}$ be any real matrix. Define:
$$
L = BB^{T} \; \in \; \mathbb{R}^{n\times n},
$$
then $L$ is symmetric and positive semidefinite.
\end{theorem}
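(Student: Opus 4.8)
The plan is to verify the two claimed properties directly from the definition $L = BB^\top$, relying only on elementary facts about the transpose and the Euclidean inner product. Both parts follow in one line each, so the real content is simply recording the algebra cleanly; I do not anticipate any substantive obstacle.

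First I would establish symmetry. Applying the transpose to the product and using the identities $(XY)^\top = Y^\top X^\top$ and $(B^\top)^\top = B$, I would compute
$$
L^\top = \bigl(BB^\top\bigr)^\top = (B^\top)^\top B^\top = B B^\top = L,
$$
so that $L = L^\top$, i.e. $L$ is symmetric.

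Second I would establish positive semidefiniteness by evaluating the associated quadratic form on an arbitrary vector $x \in \mathbb{R}^n$. Regrouping the factors gives
$$
x^\top L x = x^\top B B^\top x = (B^\top x)^\top (B^\top x) = \|B^\top x\|_2^2 \ge 0,
$$
where the final inequality is just the nonnegativity of the squared Euclidean norm. Since $x$ is arbitrary, $L$ is positive semidefinite, completing the proof.

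The only step that merits a moment of attention is the regrouping $x^\top B B^\top x = (B^\top x)^\top (B^\top x)$, which uses associativity of matrix multiplication together with $(B^\top x)^\top = x^\top B$; once this is noted, the conclusion is immediate. It is worth emphasizing that no structural assumption on $B$ (such as its being an incidence matrix with entries in $\{-1,0,1\}$) is used anywhere, so the statement holds for every real matrix $B$. This is precisely what guarantees that the incidence-based Laplacian $L = BB^\top$ is symmetric and positive semidefinite uniformly across the directed and undirected cases invoked earlier in the paper.
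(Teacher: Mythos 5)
Your proof is correct and follows essentially the same route as the paper's: symmetry via $(BB^\top)^\top = BB^\top$, and positive semidefiniteness by writing $x^\top L x = (B^\top x)^\top(B^\top x) = \|B^\top x\|_2^2 \ge 0$. Your closing remark that no structural assumption on $B$ is needed matches the paper's deliberately general statement, so there is nothing to add.
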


\begin{proof}
First:
$$
L^{T} = (B B^{T})^{T} = B B^{T} = L,
$$
so $L$ is symmetric.
Next, for any $x\in\mathbb{R}^n$, set $y = B^{T}x\in\mathbb{R}^m$.  Then
$$
x^{T}L\,x
= x^{T}(B B^{T})x
= (B^{T}x)^{T} (B^{T}x)
= y^{T} y
= \sum_{k=1}^{m} y_k^2 \;\ge\;0.
$$
Hence $L$ is positive semidefinite. 
\end{proof}

\section{Laplacian Matrix for Directed Graphs}
\label{app:directed-laplacian}
\begin{theorem}
Let $G=(V,E)$ be a directed graph on $n$ nodes (without self‐loops), with adjacency matrix $A\in\{0,1\}^{n\times n}$.  Define a signed incidence matrix 
$$
B\in\{-1,0,1\}^{n\times m},
$$
where $m=|E|$, by choosing an arbitrary but fixed orientation of each edge $e_k$ and setting
$$
B_{i,k}
=\begin{cases}
-1,&\text{if node $i$ is the tail of edge $e_k$,}\\
+1,&\text{if node $i$ is the head of edge $e_k$,}\\
0,&\text{otherwise.}
\end{cases}
$$
Let $D\in\mathbb{N}^{n\times n}$ be the diagonal matrix whose $i$th entry
$\,D_{ii}$\,
equals the total degree of node $i$, i.e. the sum of its in‐ and out‐degrees.  Then
$$
B\,B^{\!\top}
\;=\;
D \;-\;\bigl(A + A^{\!\top}\bigr).
$$
That is, for general (asymmetric) $A$, the incidence‐based Laplacian recovers the symmetrized combinatorial Laplacian.
\end{theorem}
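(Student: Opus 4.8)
The plan is to prove the matrix identity entrywise, by directly evaluating $(BB^\top)_{ij} = \sum_{k=1}^m B_{ik}B_{jk}$ and matching it against $(D - (A+A^\top))_{ij}$ in two cases: the diagonal entries $i=j$ and the off-diagonal entries $i\neq j$. The key observation driving both cases is that, since $G$ has no self-loops, every edge $e_k$ is incident to exactly two distinct nodes, contributing a $-1$ at its tail and a $+1$ at its head in the corresponding column of $B$, and $0$ elsewhere.

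For the diagonal, I would note that $(BB^\top)_{ii} = \sum_k B_{ik}^2$, and since each $B_{ik} \in \{-1,0,+1\}$, the term $B_{ik}^2$ equals $1$ precisely when node $i$ is incident to edge $e_k$ (as head or tail) and $0$ otherwise. Summing over $k$ therefore counts all edges incident to $i$, i.e.\ the sum of its in- and out-degrees, which is exactly $D_{ii}$. Since $A$ has zero diagonal (no self-loops), the right-hand side also reduces to $D_{ii}$ on the diagonal, so the two agree.

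For $i \neq j$, the term $B_{ik}B_{jk}$ is nonzero only when edge $e_k$ is incident to both $i$ and $j$, which — again using the no-self-loop assumption — means $e_k$ joins $i$ and $j$. For such an edge one endpoint is the head and the other the tail, so $B_{ik}B_{jk} = (+1)(-1) = -1$ regardless of the chosen orientation. Summing, $(BB^\top)_{ij}$ equals minus the number of edges connecting $i$ and $j$ in either direction. Because $A\in\{0,1\}^{n\times n}$ encodes a simple digraph, that count is $A_{ij}+A_{ji} = (A+A^\top)_{ij}$, matching the off-diagonal right-hand side. Assembling the two cases yields $BB^\top = D - (A+A^\top)$.

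The computation is routine; the only point requiring care is the bookkeeping of edges between a pair of nodes in the off-diagonal case. If both $i\to j$ and $j\to i$ are present, they form two distinct columns of $B$, each contributing $-1$, for a total of $-2$, which must be checked to coincide with $-(A_{ij}+A_{ji}) = -2$; verifying this consistency is the main (mild) obstacle and hinges on the simple-digraph assumption $A\in\{0,1\}^{n\times n}$ together with the absence of self-loops.
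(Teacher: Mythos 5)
Your proposal is correct and follows essentially the same route as the paper's proof: an entrywise verification split into the diagonal case (where $\sum_k B_{ik}^2$ counts edges incident to $i$, giving $D_{ii}$) and the off-diagonal case (where each edge joining $i$ and $j$, regardless of orientation, contributes $B_{ik}B_{jk}=-1$, summing to $-(A_{ij}+A_{ji})$). Your additional bookkeeping for the double-edge case ($i\to j$ and $j\to i$ both present) and the remark that the right-hand side's diagonal reduces to $D_{ii}$ due to the no-self-loop assumption are sound refinements of the same argument, not a different approach.
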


\begin{proof}
We verify the equality entry-wise.

\medskip\noindent\textbf{Diagonal entries ($i=j$).}  
$$
\bigl[BB^{\!\top}\bigr]_{ii}
=\sum_{k=1}^m B_{i,k}^2
=\sum_{k:\,i\in e_k}1
= D_{ii}.
$$

\medskip\noindent\textbf{Off-diagonal entries ($i\neq j$).}  
$$
\bigl[BB^{\!\top}\bigr]_{ij}
=\sum_{k=1}^m B_{i,k} B_{j,k}.
$$
A nonzero contribution arises only when $e_k$ connects $i$ and $j$.  If $e_k$ is oriented $i\to j$, then $B_{i,k}=-1$, $B_{j,k}=+1$, so $B_{i,k}B_{j,k}=-1$.  If $e_k$ is oriented $j\to i$, then $B_{i,k}=+1$, $B_{j,k}=-1$, again $B_{i,k}B_{j,k}=-1$.  Hence
$$
\sum_{k=1}^m B_{i,k}\,B_{j,k}
=-\bigl(\delta\{i \to j \in E\} + \delta\{j\to i \in E\}\bigr)
=-(A_{ij}+A_{ji}).
$$
Where $\delta\{\cdot\}$ is a binary function whose value is $1$ if its argument is \texttt{True}, $0$ otherwise. That is,
$\bigl[B\,B^{\!\top}\bigr]_{ij}= -\bigl(A + A^{\!\top}\bigr)_{ij}.$

\medskip Combining diagonal and off-diagonal cases yields
$$
B\,B^{\!\top}
= D \;-\;(A + A^{\!\top}),
$$
as claimed.
\end{proof}

\section[On the Nonsingularity of L + alpha I]{On the Singularity of $L + \alpha I$}
\label{app:nonsingular}
\begin{theorem}
Let $L \in \mathbb{R}^{n \times n}$ be a symmetric and positive semidefinite matrix, and let $\alpha \in \mathbb{R}_{>0}$ be a positive scalar that is not an eigenvalue of $L$. Then $L + \alpha I$ is symmetric, positive definite, and therefore nonsingular.
\end{theorem}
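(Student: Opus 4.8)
The plan is to establish the three asserted properties — symmetry, positive definiteness, and nonsingularity — in exactly that order, since each follows readily from the previous one together with the hypotheses on $L$ and $\alpha$. The entire argument reduces to elementary facts about symmetric matrices and quadratic forms, so I anticipate no genuine technical difficulty. The only point deserving a moment's attention is that the stated hypothesis ``$\alpha$ is not an eigenvalue of $L$'' is in fact \emph{not} needed for $L + \alpha I$: it would be the pertinent condition for $L - \alpha I$, whereas adding a strictly positive multiple of the identity to a positive semidefinite matrix already forces strict positivity on its own.

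First I would verify symmetry directly. Since $I^\top = I$ and $L^\top = L$ by hypothesis, linearity of the transpose gives $(L + \alpha I)^\top = L^\top + \alpha I^\top = L + \alpha I$, so $L + \alpha I$ is symmetric.

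Next, for positive definiteness, I would evaluate the quadratic form at an arbitrary nonzero $x \in \mathbb{R}^n$:
$$
x^\top (L + \alpha I)\, x = x^\top L x + \alpha\, x^\top x = x^\top L x + \alpha \|x\|_2^2.
$$
Because $L$ is positive semidefinite, $x^\top L x \ge 0$; and because $\alpha > 0$ with $x \neq 0$, the term $\alpha \|x\|_2^2$ is strictly positive. Hence $x^\top (L + \alpha I)\, x > 0$ for every nonzero $x$, which is precisely positive definiteness. An equivalent route invokes the spectral theorem: as $L$ is symmetric it has real eigenvalues $\lambda_i \ge 0$, and $L + \alpha I$ shares the same eigenvectors with eigenvalues $\lambda_i + \alpha \ge \alpha > 0$.

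Finally, nonsingularity follows from positive definiteness. If $(L + \alpha I)\, x = 0$ for some $x$, then $x^\top (L + \alpha I)\, x = 0$, and the strict inequality just established forces $x = 0$; thus the kernel is trivial and the matrix is invertible. Equivalently, all eigenvalues being strictly positive yields a strictly positive determinant. The main obstacle, such as it is, lies not in any step of the computation but in recognizing that the eigenvalue hypothesis is redundant here and that the clean quadratic-form argument suffices to deliver all three conclusions.
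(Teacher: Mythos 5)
Your proof is correct and follows essentially the same route as the paper's: symmetry by linearity of the transpose, positive definiteness via the quadratic form $x^\top L x + \alpha \|x\|_2^2 > 0$, and nonsingularity as a consequence. Your observation that the hypothesis ``$\alpha$ is not an eigenvalue of $L$'' is redundant is also correct --- the paper's own proof never uses it, and it would only be relevant for $L - \alpha I$.
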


\begin{proof}
Since both $L$ and the identity matrix $I$ are symmetric, their sum $L + \alpha I$ is symmetric as well. To prove that $L + \alpha I$ is positive definite, consider any nonzero vector $x \in \mathbb{R}^n$. Then,
$$
x^T (L + \alpha I) x = x^T L x + \alpha x^T x.
$$
Because $L$ is positive semidefinite, $x^T L x \geq 0$. Moreover, since $\alpha > 0$ and $x \neq 0$, we have $\alpha x^T x > 0$. Thus, $x^T (L + \alpha I) x > 0$ for all $x \neq 0$, and hence $L + \alpha I$ is positive definite. Positive definite matrices are invertible, so $L + \alpha I$ is nonsingular.
\end{proof}

\section{Intuition and Analysis of the \fullmodel}
\label{app:intuition}

\subsection{Motivation}
\label{sec:motivation}
Spectral sparsification \cite{batson2013spectral} has emerged as a principled approach for reducing the density of large graphs while preserving their global structural and dynamical properties. 
Unlike heuristic or naive pruning strategies scoring all edges/nodes uniformly and pruning
them based on a prefixed sparsity level \cite{chen2023demystifying} considering the lowest weights or local topological criteria (e.g., low node degree or triangle count \cite{liu2023dspar}), spectral sparsification explicitly preserves the global spectral geometry of the graph, that is maintaining the essential eigenstructure of the graph's Laplacian matrix, which encodes rich information about the global topology, connectivity, and dynamics of the network~\cite{chung1997spectral,vonluxburg2007tutorial}.
While weight-based thresholding may eliminate edges that appear weak or redundant, it provides no formal guarantees about the impact on connectivity, diffusion processes, or the spectrum of the Laplacian. In contrast, spectral sparsification methods construct subgraphs that maintain critical algebraic and dynamical properties of the original graph within a well-defined approximation bound. 

Specifically, a graph $G'$ is said to be an $\varepsilon$-spectral sparsifier of a graph $G$ if the quadratic form of the Laplacians satisfies $(1 - \varepsilon)x^T L x \leq x^T L' x \leq (1 + \varepsilon)x^T L x$ for all vectors $x \in \mathbb{R}^n$, where $L$ and $L'$ denote the Laplacian matrices of $G$ and $G'$, respectively. This condition ensures that key properties such as \textit{effective resistance, commute times, and spectral clustering behavior are approximately maintained in the sparsified representation}. In particular, the \emph{effective resistance} \cite{klein1993resistance} between nodes, which quantifies the influence of an edge on global connectivity, plays a central role in modeling diffusion and current flow through the network. Maintaining approximate effective resistances guarantees that edge importance in terms of global communication is preserved. Similarly, \emph{commute times}, defined as the expected number of steps a random walker takes to travel from one node to another and return, are tightly linked to the spectrum of the Laplacian and to resistance distances. These metrics reflect how efficiently information or influence spreads in the network. Furthermore, preserving the Laplacian spectrum also retains the embedding space used in \emph{spectral clustering} \cite{ding2024survey}, where the eigenvectors of the Laplacian encode low-dimensional representations that capture community structure, modularity, or functional subsystems. As a result, spectral sparsification allows the reduced graph to faithfully approximate the original graph’s geometry and signal propagation behavior, which is essential in applications such as brain network analysis, semi-supervised learning, and the design of graph neural network filters.

\subsection{On Exact Binary Optimization}
An alternative approach to node-level sparsification would be to solve the combinatorial problem
\begin{equation}
    \min_{Z \in \{0,1\}^n} \mathcal{L}(L^*_A, M^*_X, L^*_{ZAZ}, M^*_{ZX}) + \lambda \, \mathrm{tr}(Z) \, .
\end{equation}
However, this formulation entails a combinatorial search over $2^n$ binary masks, making it intractable even for moderately sized graphs. Instead, our method leverages a continuous relaxation of $Z$ via Gumbel-sigmoid sampling, enabling efficient gradient-based optimization. This allows for scalable training while still encouraging discrete sparsification through the trace penalty. Additionally, the use of spectral alignment losses ensures a balanced trade-off between structural and feature preservation.

\subsection{Computational Complexity and Stability}
\label{sec:complexity_stability}
To assess the theoretical and practical feasibility of the proposed \fullmodel, an analysis of its stability, space complexity, and time complexity is presented.

\subsubsection{Model Stability}
As detailed in Appendix~\ref{app:stability}, the normalization of the structural matrix $Q_t$ via diagonal matrices $U_t$ and $V_t$ ensures that the transformation $U_t \, Q_t \, V_t$ remains non-expansive with respect to the Euclidean norm, satisfying $\|U_t \, Q_t \, V_t\|_2 \leq 1$. This property constrains the Lipschitz constant of each \layer layer, mitigating risks of feature explosion or vanishing across multiple layers.

The non-expansiveness contributes to enhanced numerical stability and consistent gradient propagation, which in turn supports more reliable convergence during optimization. These benefits are particularly relevant in deep graph architectures, where instabilities are commonly encountered.

Additional stability is provided by the use of shifted Laplacian and Gram matrices (Equations~\ref{eq:shifted-laplacian} and \ref{eq:shifted-gram}), whose eigenvalues are strictly positive, as demonstrated in Appendix~\ref{app:nonsingular}. This guarantees that the transformations remain well-conditioned, avoiding numerical issues associated with near-singular matrices.

Collectively, these mechanisms promote robustness to input perturbations and enable stable end-to-end training of deep graph networks.

\subsubsection{Space Complexity}
Each \layer layer introduces a temporary tensor $J_{t+1} \in \mathbb{R}^{p_t \times p_t}$ and three learnable parameter matrices: $\Theta_t \in \mathbb{R}^{r_{t+1} \times p_t}$, $\Phi_t \in \mathbb{R}^{p_t \times r_{t+1}}$, and $\Psi_t \in \mathbb{R}^{p_t \times p_{t+1}}$, for every $t \in \{1, \ldots, T\}$. 
The size of the learnable parameters remains both tractable and explicitly controllable, as their dimensions are specified by design and are independent of the size or structure of the input graph. The only exception is the first layer, where $p_0 = f$ depends on the dimensionality of the input features. In typical applications, however, $f$ is significantly smaller than the number of nodes $n$ or edges $m$, making this dependency negligible. In cases where $f$ is unusually large, standard dimensionality reduction techniques, such as Principal Component Analysis (PCA), can be applied to the input feature matrix $X$ during preprocessing.

Assuming constant dimensions across layers, i.e., $r_t = r$ and $p_t = p$ for all $t$, the total space required by the \model model is given by:
\begin{equation}
\mathcal{O}\bigl(T \, p \, (p + r)\bigr).
\end{equation}

In the node-level sparsification setting, an additional feedforward layer processes a concatenation of the flattened matrices $Q_T$ and $H_T$, producing an output vector of size $n$. This results in an overall space complexity of:
\begin{equation}
\mathcal{O}\bigl(T \, p \, (p + r) + r \, (r + f) \, n\bigr).
\end{equation}
This accounts for both model parameters and the additional memory required by the final selection mechanism.

\subsubsection{Time Complexity}
\paragraph{Forward Pass.}
To analyze the time complexity of the \model architecture, the operations within each \layer{} layer, as defined in Equation~\ref{eq:jge}, are examined in detail. Let $r_t = r$ and $p_t = p$ for all layers $t \in \{1, \ldots, T\}$, as is typically assumed for simplicity.

Each layer involves the following steps:
\begin{itemize}
    \item Construction of diagonal normalization matrices $U_t, V_t \in \mathbb{R}^{r \times r}$ from $Q_t \in \mathbb{R}^{r \times r}$, requiring $\mathcal{O}(r^2)$.
    \item Elementwise normalization to compute $Q_t' = U_t Q_t V_t$, which adds another $\mathcal{O}(r^2)$ (as $U_t$ and $V_t$ are diagonal).
    \item Bilinear projection $Q_t'' = H_t^\top Q_t' H_t$, resulting in a matrix in $\mathbb{R}^{p \times p}$ and costing $\mathcal{O}(p r^2 + r p^2)$.
    \item Computation of the intermediate tensor $J_{t+1} = \Theta_t Q_t'' \in \mathbb{R}^{r \times p}$, which requires $\mathcal{O}(r p^2)$.
    \item Final updates of $Q_{t+1} \in \mathbb{R}^{r \times r}$ and $H_{t+1} \in \mathbb{R}^{r \times p}$ through nonlinear transformations, both costing $\mathcal{O}(p r^2)$.
\end{itemize}
Summing the dominant terms, the per-layer cost is $\mathcal{O}(p r^2 + r p^2)$,
therefore, the total time complexity of the forward pass through a \model network with $T$ \layer layers is:
\begin{equation}
\mathcal{O}\bigl(T \, (p r^2 + r p^2)\bigr) \, .
\end{equation}
This estimate represents the worst-case scenario. In practice, the use of optimized GPU matrix libraries can reduce the empirical cost significantly via parallelization and memory-efficient algorithms, often achieving sub-cubic runtime behavior.

In the case of node-level graph sparsification, a final projection to the original node space is required, introducing an additional cost of $\mathcal{O}(r \, (r + p) \, n)$. The overall forward complexity then becomes:
\begin{equation}
\mathcal{O}\bigl(T \, (p r^2 + r p^2) + r \, (r + p) \, n \bigr) \, .
\end{equation}

\paragraph{Loss Function Complexity.}
The computation of the shifted Laplacian matrix $L^*_{ZAZ} \in \mathbb{R}^{n \times n}$ (Equations~\ref{eq:undirected-laplacian}, \ref{eq:directed-laplacian}, and~\ref{eq:shifted-laplacian}) depends on the type of graph:
\begin{itemize}
    \item \textit{Undirected graphs}: computing $L = D - ZAZ$ costs $\mathcal{O}(n^2)$, as $D$ is diagonal and $Z$ is diagonal and binary.
    \item \textit{Directed graphs}: computing $L = D - (ZAZ + (ZAZ)^\top)$ incurs $\mathcal{O}(n^2)$ as well.
\end{itemize}
The shifted Laplacian, by adding the scalar shift $\alpha_1 I$, costs $\mathcal{O}(n)$, 
thus, in the worst case (directed setting), its computation $\mathcal{O}(n^2)$ time.
In contrast, the shifted Gram matrix $M^*_{ZX} \in \mathbb{R}^{f \times f}$ (Equation~\ref{eq:shifted-gram}) is formed from $X^\top Z X + \alpha_2 I$, which has cost $\mathcal{O}(n f^2)$.

The cost of computing all eigenvalues of a dense matrix in $\mathbb{R}^{n \times n}$ is typically $\mathcal{O}(n^3)$~\cite{golub13}. However, when the matrix is symmetric and positive definite, as in the case of this work, efficient algorithms exist:
\begin{itemize}
    \item In the dense setting, the \textit{MRRR algorithm} (Multiple Relatively Robust Representations) can reduce the cost to $\mathcal{O}(n^2)$ under favorable conditions~\cite{10.1145/1186785.1186788}.
    \item In the sparse setting, \textit{iterative methods} such as the \textit{Lanczos algorithm}~\cite{cullum2002lanczos} compute the top-$k$ eigenvalues and corresponding eigenvectors with cost $\mathcal{O}(k \cdot \mathtt{nnz})$, where $\mathtt{nnz}$ is the number of non-zero entries.
\end{itemize}
Computing the norm of the difference of the two sets of eigenvalues costs $\mathcal{O}(k)$, assuming $k_1 = k_2 = k$, that is negligible.

Overall, the asymptotical worst-case upper bound for computing the full \fullloss loss function is:
\begin{equation}
    \mathcal{O}(\text{min}(n^2, k \cdot \mathtt{nnz}) + n f^2) \,.
\end{equation}
In the node-level sparsification setting, the trace regularization term (Equation~\ref{eq:loss-spar}) adds a negligible $\mathcal{O}(n)$.

\paragraph{Summary.}
Let $s$ denote the number of training epochs. Table~\ref{tab:time-complexity} summarizes the overall time complexity for both training and inference. 

\begin{table}[!ht]
\scriptsize
\centering
\begin{tabular}{l|c}
\hline
\textbf{Phase} & \textbf{Time Complexity} \vspace{0.5em} \\
\hline
  Training  & $\mathcal{O}\bigl(s \, (T \, (p r^2 + r p^2) + r \, (r + p) \, n + \text{min}(n^2, k \cdot \mathtt{nnz}) + n f^2))\bigr)$ \vspace{0.5em}\\
  Inference & $\mathcal{O}(T \, (p r^2 + r p^2) + r \, (r + p) \, n)$ \\
\hline
\end{tabular}
\caption{Time complexity of the \model{} architecture in its two principal configurations, for both training and inference.}
\label{tab:time-complexity}
\end{table}

\section{Data}
\label{app:data}

In our experimental assessement, we used the following datasets:

\begin{itemize}
    \item \textbf{Cora}\footnote{\url{https://linqs.org/datasets/\#cora}} is a citation network where nodes represent scientific publications and edges denote citation links, i.e., a citation from a publication to another. Node features are bag-of-words vectors built from a dictionary of unique terms, with binary indicators for word presence.
    \item \textbf{Citeseer}\footnote{\url{https://github.com/ZPowerZ/citeseer-dataset/tree/master}, \url{https://linqs.org/datasets/\#citeseer-doc-classification}} is another citation graph of research papers. As in Cora, nodes correspond to publications and edges to citation links, with bag-of-words feature vectors.
    \item \textbf{Actors}\footnote{\url{https://pytorch-geometric.readthedocs.io/en/2.6.0/generated/torch_geometric.datasets.Actor.html\#torch_geometric.datasets.Actor}} is a directed co-occurrence graph in which nodes represent actors and directed edges indicate that one actor is mentioned in the Wikipedia page of another. Node features are bag-of-words representations of the corresponding page content.
    \item \textbf{PubMed}\footnote{\url{https://pytorch-geometric.readthedocs.io/en/2.6.0/generated/torch_geometric.datasets.CitationFull.html\#torch_geometric.datasets.CitationFull}} is a large-scale citation graph where nodes are scientific articles and edges represent citation relationships, treated as undirected. Node attributes are TF-IDF vectors extracted from textual content.
    \item \textbf{Twitch-EN}\footnote{\url{https://pytorch-geometric.readthedocs.io/en/2.6.0/generated/torch_geometric.datasets.Twitch.html\#torch_geometric.datasets.Twitch}} is a social network where each node corresponds to a Twitch user and edges represent mutual follow relationships. Node features encode user-level metadata. The dataset contains overlapping communities and densely connected subgroups.
\end{itemize}

\noindent
All graphs are pre-processed by removing self-loops and duplicate edges. 

\section{Evaluation Metrics}
\label{sec:metrics}
\paragraph{Connection-based metrics.}
Connection-based metrics capture both local connectivity and global network behavior through community-level structure. We consider three metrics: (i) the size of the Largest Connected Component (LCC) $n_{LCC}$, (ii) the average node degree $\bar{k}$, and (iii)  the modularity $M$.

The size of the largest connected component $n_{LCC}$ measures the number of nodes in the largest connected subgraph in $G$. Tracking the LCC provides a straightforward estimate of how many nodes remain part of the principal connected structure. 

The degree of a node $i$ is defined as $k_i=\sum_{j \ne i}{a_{ij}}$, where $a_{ij}$ denotes the adjacency matrix entry of the graph $G$. This metric corresponds to the number of neighbors of a node. In our analysis, we focus on the average node degree $\bar{k}$, which measures the mean number of neighbors per node and provides a concise measure of the network’s overall connectivity. For directed graphs, we also consider the average in-degree $\bar{k}_{in}$, i.e., the mean number of incoming edges, and the average out-degree $\bar{k}_{out}$, i.e., the mean number of outgoing edges.

The modularity $M$ quantifies the extent to which a network is organized into densely connected clusters of nodes, with relatively few connections between different clusters. To assess each subject's community modularity, in our analysis, we first identify the communities within the networks by using the \textit{Louvain} algorithm \cite{blondel2008fast}. Once the communities are detected, the modularity $M$ of the partitioning is computed as:

\begin{equation}
M =  \frac{1}{2m}  \sum\limits_{ij} \left(\omega _{ij} - \frac{k_i k_j}{2m} \right)\delta(c_i, c_j)
\end{equation}

where $m$ is the sum of the edge weights of $G$, $w_{ij}$ is the weight of edge $(i,j)$ in $G$, $k_i$ and $k_j$ are the weighted degrees of nodes $i$ and $j$ respectively, $c_i$ and $c_j$ are the communities of the corresponding nodes, and  $\delta$ is the Kronecker function which yields $1$ if $i$ and $j$ are in the same community, that is  $c_i = c_j$, zero otherwise. Networks with high modularity are characterized by strong intra-community connectivity and weak inter-community connectivity. If the modularity of the input graph and the modularity of the sparsified graph remain similar, the sparsification preserves the community structure, meaning the sparsified graph retains key intra-community edges, removing edges likely belonging to inter-community connections, which are less critical for modularity.

Since sparsification inherently reduces the number of nodes/edges, both the size of the largest connected component and the average node degree decrease accordingly. These measures are therefore not used to assess structural preservation, but rather to provide an estimate of the reduction rate in terms of node connectivity. In contrast, metrics such as modularity are employed to evaluate the extent to which the community structure is preserved after sparsification.

\paragraph{Spectral-based metrics.} Spectral measures derive from the eigenvalues and eigenvectors of graph matrices. We consider three metrics: (i) the Minimum Absolute Spectral Similarity (MASS) $\delta_{min}$ and (ii) the epidemic threshold $\tau_c$.

The Minimum Absolute Spectral Similarity~\cite{Fortunato2018} $\delta_{min}$ is a quality index measuring the difference between the spectral properties of a graph and its sparsified version after edge removals. The measure specifically quantifies the difference between the Laplacian $L$ of $G$ and the Laplacian $L'$ of the sparsifier $G'$. The minimum relative spectral similarity (MRSS) between $L'$ and $L$ is usually computed as:
\begin{equation}
\delta_{min}^R=min_{\forall z}\frac{z^TL'z}{z^TLz}
\end{equation} 

where $z$ can be any vector with $N$ elements and  $z^TL'z$ is the Laplacian quadratic form. The vector $z$ intuitively represents the direction along which the difference between the two graphs is measured. As such, the minimum value of similarity reflects the worst case. However, if $G'$ disconnects into components, the MRSS value becomes zero, making the use of this measure unstable for many optimization algorithms.  An alternative viable measure is the \textit{absolute spectral similarity} proposed in \cite{Fortunato2018}:
\begin{equation}
\delta(z)=1-\frac{z^T\Delta Lz}{z^T \, [\lambda_1]_L \, z}
\end{equation}

where $[\lambda_1 \ge \lambda_2 \ge \ldots]_{L}$ are the eigenvalues of $L$, and $\Delta L = \Delta D - \Delta A$ is the Laplacian of the difference graph $\Delta G$ having the same set of nodes of $G$ and the set of edges removed during the sparsification. Since the input vector $z$ is variable, considering the worst-case scenario, the \textit{minimum absolute spectral similarity} (MASS) is
\begin{equation}
\delta_{min}=min_{|z|=1}\left (1-\frac{z^T\Delta Lz}{[\lambda_1]_L} \right )=1-\frac{[\lambda_1]_{\Delta L}}{[\lambda_1]_L}
\end{equation}

where $[\lambda_1 \ge \lambda_2 \ge \ldots]_{\Delta L}$ are the eigenvalues of the difference Laplacian $\Delta L$ and, without loss of generality, only the unit length vectors $|z|=1$ are considered.

The MASS is able to practically quantify the robustness of a network at a mesoscopic (i.e., communities) level when edges are removed and the network disconnects. Ranging between 0 and 1, the MASS offers a practical and computationally efficient similarity measure between the original graph and its version after the edge reduction, indicating whether the spectral properties of the original graph are kept or not after its perturbation. 

The epidemic threshold $\tau_c$: the largest eigenvalue of the adjacency matrix of $G$ also known as \textit{spectral radius} and denoted with $\lambda_1$, is considered a powerful character of dynamic processes on complex networks since it characterizes the spread of viruses and synchronization processes \cite{li2011correlation} \cite{VanMieghem:2009}. It is a common practice to choose the inverse of the spectral radius, the \textit{epidemic threshold} $\tau_c$ as a measure for \textit{robustness}: the larger the epidemic threshold, the more robust a network is against the spread of a virus. In epidemiology theory, the inverse of $\lambda_1$, in fact, characterizes the threshold of a phase transition \cite{castellano2010thresholds} over which the network shifts from a virus-free state with zero infected nodes to fractions of infected nodes where the virus is persistent. The epidemic threshold formula

\begin{equation}
\tau_c=\frac{1}{\lambda_1}
\end{equation}
is rigorously demonstrated in the N-intertwined approximation, named NIMFA, of the exact SIS (Susceptible-Infected-Susceptible) model \cite{VanMieghem:2009}. The spectral radius which is computed in $O(m)$, and hence the epidemic threshold, is strictly related to the path capacity of the network. In \cite{restrepo2007approximating}, it is demonstrated that $\lambda_1$ can be approximated by $N3/N2$, where $N_k$ is the total number of walks in $k$ hops. Van Mieghem et al. proved that $N3/N2$ is a lower bound for the spectral radius \cite{van2010influence}. If the sparsified graph has a similar epidemic threshold, the sparsification preserves the network’s robustness and also its ability to transmit information or infections, retaining key high-degree and central edges. The epidemic threshold thus serves as an indicator of both network robustness and \textit{information preservation}, reflecting the network's ability to maintain connectivity and support effective information propagation despite sparsification.

\section{Contestant methods}\label{sec:competitors}
{\model} is compared by considering the following contestant methods:
\begin{itemize}
\item Random Uniform Sparsifier (RUS): randomly samples edges from a given adjacency matrix $A$ to create a sparsified graph. The sparsification is performed uniformly, meaning each edge is equally likely to be selected, regardless of its weight or structural role. The approach is simple and unbiased, but may discard important edges.

\item Spielman Sparsifier (SS): spectral sparsification through the effective resistance values of the edges.  Based on the foundational work by Spielman and Srivastava \cite{spielman2011graph}, the approach retain edges with higher effective resistance $\omega_{ij}$  computed as
\begin{equation}
\omega_{ij}=l_{ii}^+ +l_{jj}^{+}-2l_{ij}^{+} \, ,
\end{equation}
where $l^+_{ij}$ are the elements of the Moore-Penrose \textit{pseudoinverse} matrix $L^+$ of the weighted Laplacian matrix  of $G$.

\item Kim et al. \cite{kim2022link} edge attribute based sparsification (KS): a class of methods assigning edge importance based on topological features computed locally for each edge. Edges are then sparsified by selecting those with the highest attribute-based scores, enhancing local structure preservation. Specifically, three variants are considered: 
\begin{itemize}
\item KSJ (Jaccard Similarity): edge weight is computed as the Jaccard index between the neighborhoods of its two endpoints $i$ and $j$:
\begin{equation}
\text{J}(i, j) = \frac{|N(i) \cap N(j)|}{|N(i) \cup N(j)|} \, .    
\end{equation}

\item KSCT (Common Triangles): edge weight is proportional to the number of triangles that include the edge, promoting edges involved in tightly connected clusters:
\begin{equation}
\text{T}(i, j) = |N(i) \cap N(j)|-2 \, .    
\end{equation}

\end{itemize}

\item D-Spar \cite{liu2023dspar}: prepares a smaller graph for a GNN (e.g., GCN, GraphSAGE, GAT, etc.) to train or infer on. D-Spar indirectly affects the GNN by deciding what structure the GNN will see and learn from. More specifically, this preprocessing strategy  computes a score for each edge as
\begin{equation}
    \text{Dscore}(i, j) = \frac{1}{D_{ii}}+\frac{1}{D_{jj}} \, ,
\end{equation}
where $D_{ii}$ and $D_{jj}$ are the degrees of nodes $i$ and $j$ respectively. Then, a percentage of edges with the highest scores are kept, while all the other edges are removed. This scoring scheme prioritizes edges connecting low-degree nodes, which are typically more crucial for maintaining the global structure of sparse graphs.

\end{itemize}

Since our experiments involve both directed and undirected graphs, the compared sparsification methods were adapted accordingly to handle directionality. Edge-based scores like Jaccard similarity and common triangles were computed using both in- and out-neighbors, and triangle counts considered directed motifs such as cycles and feedforward structures. Finally, degree-based quantities were computed by distinguishing in-degree and out-degree of each node (i.e.,  $D_{ii}^{+}$, $D_{ii}^{-}$).


\end{document}